\documentclass[pmlr]{jmlr} % new name PMLR (Proceedings of Machine Learning Research)

\newcommand{\tagdsmode}{proceedings}

\makeatletter
\newcommand{\tagdssubmission}{submission}
\newcommand{\tagdsproceedings}{proceedings}

\ifx\tagdsmode\tagdsproceedings
\else\ifx\tagdsmode\tagdssubmission
  \def\ps@jmlrtps{%
    \let\@mkboth\@gobbletwo
    \def\@oddhead{\scriptsize Under Review at the 2nd Conference on Topology, Algebra, and Geometry in Data Science\hfill}%
    \let\@evenhead\@oddhead
    \def\@oddfoot{}%
    \let\@evenfoot\@oddfoot
  }

\else
  \def\ps@jmlrtps{%
    \let\@mkboth\@gobbletwo
    \def\@oddhead{}%
    \let\@evenhead\@oddhead
    \def\@oddfoot{}%
    \let\@evenfoot\@oddfoot
  }
\fi\fi
\makeatother

\usepackage{longtable}% for long tables

\usepackage{booktabs}
\theorembodyfont{\upshape}
\theoremheaderfont{\scshape}
\theorempostheader{:}
\theoremsep{\newline}

\jmlrvolume{334}
\jmlryear{2026}
\jmlrworkshop{Topology, Algebra, and Geometry in Data Science}

\usepackage{amsfonts}
\usepackage{amssymb}
\usepackage{array}
\usepackage{textcomp}
\usepackage{stfloats}
\usepackage{url}
\usepackage{graphicx}
\usepackage{tikz}
\usetikzlibrary{arrows.meta,positioning,fit,backgrounds,calc,shapes.geometric}
\usepackage{float}
\usepackage{multirow}
\usepackage{booktabs}
\usepackage{amsmath}
\usepackage{mathrsfs}
\usepackage{xcolor}
\makeatletter
\let\c@subfigure\undefined\let\subfigure\undefined
\makeatother
\usepackage{subcaption}
\usepackage{nicefrac}
\usepackage{microtype}
\usepackage{mathtools}
\newcolumntype{P}[1]{>{\centering\arraybackslash}p{#1}}
\usepackage{algorithm}
\usepackage{algorithmic}
\usepackage{enumitem}
\usepackage{bm}
\usepackage[capitalize,noabbrev]{cleveref}

\usepackage{titlesec}
\titlespacing*{\section}{0pt}{1.2ex plus 0.3ex minus 0.2ex}{0.6ex plus 0.15ex}
\newcommand{\optfm}{\textsc{OptFM}}
\newcommand{\sgformer}{\textsc{SGFormer}}
\newcommand{\rwpe}{\textsc{RWPE}}
\newcommand{\rni}{\textsc{RNI}}
\newcommand{\lappe}{\textsc{LapPE}}

\newcommand{\multiset}[1]{\{\!\!\{#1\}\!\!\}}
\DeclareMathOperator*{\Pool}{Pool}
\DeclareMathOperator*{\HASH}{HASH}
\DeclareMathOperator*{\LinAttn}{LinAttn}

\usepackage[moderate,tracking=normal,paragraphs=normal]{savetrees}

\title[A 1-WL Characterization of MILP Graph Transformers]{A Weisfeiler--Leman Characterization of Global-Attention Graph Transformers for Mixed-Integer Linear Programs}

\ifx\tagdsmode\tagdssubmission

\else
\author{\Name{Md Abrar Jahin} \Email{jahin@usc.edu, jahin@isi.edu} \\
\Name{Craig A. Knoblock} \Email{knoblock@isi.edu}\\
\Name{Jay Pujara} \Email{jpujara@isi.edu}\\
\addr University of Southern California, Los Angeles, CA 90089, USA\\
\addr USC Information Sciences Institute, Marina del Rey, CA 90292, USA}

\fi

\ifx\tagdsmode\tagdsproceedings
\fi

\begin{document}

% \author{
% \IEEEauthorblockN{
% Md Abrar Jahin\IEEEauthorrefmark{1}
% }
% \IEEEauthorblockA{\IEEEauthorrefmark{1}\textit{\small Thomas Lord Department of Computer Science, University of Southern California, Los Angeles, CA 90089, USA}}
% \thanks{Corresponding author: Md Abrar Jahin (email: jahin@usc.edu)}
% }

%%% ORCID %%%%
% 0000-0002-1623-3859  <- Abrar
% 0009-0009-1894-9533 <- Akmol

% 0000-0001-8437-498X <- Nilanjan sir

% The paper headers
% \markboth{
% }{Jahin \MakeLowercase{\textit{et al.}}: DyCAF-Net}

\maketitle
\addtocounter{footnote}{-1}
\renewcommand{\thefootnote}{\fnsymbol{footnote}}
\footnotetext[2]{Code:~\scriptsize{\url{https://github.com/Abrar2652/optfm_probing/}}}
\renewcommand{\thefootnote}{\arabic{footnote}}

%% ================================================================
%%  ABSTRACT
%% ================================================================
\begin{abstract}
Graph foundation models (GFMs) equipped with global attention are increasingly used to learn representations of mixed-integer linear programs (MILPs), with the stated aim of capturing structural information beyond the locality of standard graph neural networks. We study the expressive power of these architectures through the lens of graph isomorphism testing and ask which MILP instances they map to identical representations. We prove that a broad class of hierarchical graph transformers combining global linear attention, edge-weighted cross-attention, and bipartite message passing is bounded by the one-dimensional Weisfeiler--Leman (1-WL) test: for any parameter setting, any pair of 1-WL-equivalent MILP graphs receives an identical graph embedding. The result follows from a compositional analysis in which each architectural component is shown to be a symmetric multiset function and therefore to preserve 1-WL equivalence. We validate the characterization across ten architecturally diverse graph encoders, including Graphormer-, GraphGPS-, Set-Transformer-, and Gasse-style models. Across model capacities, graph scales, and pooling operators, all tested encoders map 1-WL-equivalent non-isomorphic graph pairs to numerically identical embeddings. We then analyze the consequences of this representation equivalence for downstream prediction: graph invariants that vary within a 1-WL equivalence class are not recoverable from the resulting representations. Finally, we localize the source of expressiveness beyond 1-WL to the input encoding rather than the attention mechanism. Random-walk positional encodings separate the constructed pairs, while additional constructions characterize the limits of this remedy. Together, these results provide a theoretical and empirical characterization of the expressive power of global-attention graph foundation models, together with an encoder-agnostic diagnostic for detecting 1-WL-induced representation equivalence.
\end{abstract}

\begin{keywords}
Weisfeiler--Leman test, expressiveness, graph invariants, graph transformers, global attention, positional encodings, mixed-integer linear programming
\end{keywords}

%% ================================================================
%%  1. INTRODUCTION
%% ================================================================
\section{Introduction}
\label{sec:intro}

Graph neural networks and, more recently, graph transformers have become central tools for learning over combinatorial optimization problems. Mixed-integer linear programs (MILPs) are a canonical target, represented as a bipartite graph with constraint nodes, variable nodes, and coefficient-weighted edges~\citep{gasse2019exact,prouvost2020ecole}. A growing line of work pre-trains graph foundation models (GFMs) on such representations and reuses the embeddings across downstream tasks such as branching, instance similarity, and feasibility or runtime prediction~\citep{yuan2025optfm,wu2023sgformer,liu2023towards,bengio2021machine}. A defining architectural feature of these models is global attention, which lets every node attend to every other node and is intended to model structure beyond the local neighborhoods accessed by message passing. Our analysis targets this architectural design rather than any specific pre-trained checkpoint: we study encoders that instantiate the same combination of global attention, cross-attention, and message passing at the scale accessible for exhaustive theoretical and empirical control, and our results are stated as properties of the architecture that hold for every parameter setting and therefore transfer to full-scale instances of the same design.

Existing studies of these architectures focus primarily on empirical performance; the theoretical understanding of what hierarchical graph-transformer encoders can represent is comparatively limited. In particular, it is not established how the combination of global attention, cross-attention between node types, and an auxiliary message-passing branch affects the set of graphs an encoder maps to distinct representations. Since a learned embedding is a bottleneck through which all downstream computation passes, two instances assigned the same embedding are indistinguishable to every predictor built on it, regardless of model size or training, so characterizing which instances share a representation is a prerequisite for understanding downstream behavior.

The standard tool for such a characterization is the one-dimensional Weisfeiler--Leman (1-WL) color-refinement test, which upper-bounds the distinguishing power of message-passing GNNs~\citep{cai1992lower,xu2019powerful,morris2019wlgo}. \citet{chen2023expressive} specialize this analysis to bipartite MILP graphs and exhibit 1-WL-equivalent instances that differ in feasibility. These results characterize message-passing architectures. Whether the global-attention mechanisms used in graph-transformer foundation models extend distinguishing power beyond 1-WL has not been resolved.

This paper addresses the following question: \emph{Does global attention increase expressive power beyond 1-WL for bipartite MILP graph encoders?} We study this question for a class of hierarchical encoders that combine global linear self-attention, edge-weighted cross-attention, a bipartite message-passing branch, a convex fusion step, and permutation-invariant pooling, matching the structure used by recent MILP graph foundation models. The theoretical contribution is a compositional expressiveness analysis: we formalize a 1-WL-bounded operator and show that each encoder stage is a symmetric function of a multiset of node representations. Since a composition of 1-WL-bounded operators is again 1-WL-bounded, the analysis yields a single characterization: for any parameter setting, the encoder assigns identical embeddings to 1-WL-equivalent MILP graphs. The empirical contribution validates this characterization and analyzes its downstream consequences. We construct families of 1-WL-equivalent non-isomorphic MILP graphs with a verified separation guarantee (\Cref{fig:hero}, \Cref{app:overview}), evaluate ten architecturally diverse encoders, probe what graph invariants the resulting embeddings encode, and examine input encodings above 1-WL that separate the constructed pairs.

Our main contributions are:
\textbf{(1)} We establish a 1-WL expressiveness characterization for a broad class of hierarchical graph-transformer architectures for MILP representations, showing that their constituent operations preserve 1-WL equivalence through symmetric multiset aggregation (\Cref{sec:theorem}). \textbf{(2)} We introduce a parameterized family of 1-WL-equivalent non-isomorphic MILP graph constructions and use it to evaluate ten graph encoders, including Graphormer-, GraphGPS-, Set-Transformer-, and Gasse-style architectures, across varying capacities, graph scales, and pooling operators (\Cref{sec:pairs,sec:main}). \textbf{(3)} We empirically characterize the consequences of representation equivalence through probing and downstream prediction experiments, and introduce an encoder-agnostic diagnostic for identifying 1-WL-induced representation equivalence in graph embeddings (\Cref{sec:probes,sec:discussion}). \textbf{(4)} We analyze the role of structural input encodings in extending distinguishability beyond the baseline representation, evaluating random-walk, Laplacian, and LP-derived features and identifying classes of graph pairs that remain indistinguishable under finite-length positional encodings (\Cref{sec:rwpe}). Because all evaluated architectures are permutation equivariant, each test pair is verified to be non-isomorphic as a feature-labeled graph, ensuring that indistinguishability cannot be attributed to label-preserving graph isomorphism (\Cref{sec:pairs}).

%% ================================================================
%%  2. RELATED WORK
%% ================================================================
\section{Related Work}
\label{sec:related}

\hspace{20pt}\textbf{Expressiveness and the Weisfeiler--Leman ladder.}
Expressive power is commonly measured by the number of non-isomorphic graphs a model distinguishes, with the 1-WL color-refinement test as the reference point~\citep{cai1992lower}. \citet{xu2019powerful} and \citet{morris2019wlgo} proved that no message-passing GNN separates two graphs that 1-WL declares equivalent, and a subsequent literature sought greater distinguishing power through higher-order networks, subgraph counting, and random features~\citep{morris2019wlgo,abboud2020rni,sato2021random}. The present analysis concerns where global-attention graph transformers fall relative to this hierarchy (\Cref{fig:hierarchy}).

\textbf{Expressiveness of graph transformers.}
A parallel line of work characterizes the distinguishing power of graph transformers themselves. \citet{zhu2023structural} introduce the SEG-WL test, an upper bound on a broad class of graph transformers, and show that their expressive power is governed by the choice of structural encoding rather than by attention. \citet{black2024comparing} prove that transformers with absolute and relative positional encodings are equivalent in distinguishing power on featureless graphs, isolating positional encoding as the locus of expressiveness. \citet{muller2024aligning} align pure transformers with the $k$-WL hierarchy through tokenization, obtaining transformers that provably exceed 1-WL via their input encoding. These results establish, for general graph transformers, that expressiveness is determined by structural or positional encodings. Our contribution is complementary and more specific: we prove an \emph{exact} 1-WL bound for the concrete hierarchical global-attention encoder used by MILP foundation models, covering global linear attention, edge-weighted cross-attention, a message-passing branch, convex fusion, and pooling under a single compositional argument, and we localize the residual expressiveness to the input encoding on bipartite MILP graphs, where feasibility itself can vary within a 1-WL class~\citep{chen2023expressive}.

\textbf{Graph transformers and ML for MILPs.}
Standard graph transformers include Graphormer~\citep{ying2021graphormer}, GraphGPS~\citep{rampasek2022gps}, \sgformer{}~\citep{wu2023sgformer}, and the Set Transformer~\citep{lee2019set}; these are among the architectures we analyze in \Cref{sec:setup}. The bipartite constraint--variable representation and solver-derived features originate with \citet{gasse2019exact}, are standardized in Ecole~\citep{prouvost2020ecole}, and underpin learning-to-branch and instance mining~\citep{bengio2021machine,khalil2017learning,gupta2020hybrid}. \citet{chen2023expressive} characterized GNN power for MILPs via 1-WL with a feasibility-differing pair, and \optfm{}~\citep{yuan2025optfm} is a recent multi-view graph-transformer foundation model for combinatorial optimization, which we treat as representative of the broader symmetric-aggregation design.

\textbf{Positional and structural encodings.}
Random-walk (RWPE) and Laplacian (LapPE) positional encodings inject structural signals that 1-WL cannot compute~\citep{dwivedi2022rwpe,dwivedi2021lap} and are strictly weaker than the 2-FWL test. Both separate the constructed pairs in our experiments (\Cref{sec:rwpe}). We report RWPE and LapPE together throughout; we foreground RWPE in the analysis of \Cref{sec:rwpe} only because its distinguishing action admits a closed-form explanation in terms of $k$-step return probabilities (\Cref{sec:rwpe}), which lets us derive the exact shortest distinguishing walk length and, conversely, construct pairs that finite-length RWPE provably cannot separate. LapPE separates the same pairs empirically but carries sign and basis ambiguities~\citep{dwivedi2021lap} that make an analogous closed-form characterization less direct; we use it, alongside RNI, as a corroborating control rather than as a claim that RWPE is uniquely effective.

%% ================================================================
%%  3. PRELIMINARIES
%% ================================================================
\section{Background and Setup}
\label{sec:prelim}
This section fixes the three ingredients the analysis rests on: the bipartite-graph representation of a MILP, the 1-WL color-refinement test on that graph, and the foundation-model encoder under study. Each is stated informally and then formally.

\textbf{From a MILP to a bipartite graph.} A MILP minimizes a linear objective under linear constraints with some integer variables: $\min\{c^\top x : Ax \circ b,\ l \le x \le u,\ x_j \in \mathbb{Z}\ (j \in I)\}$, $A \in \mathbb{R}^{m\times n}$, $\circ\in\{\le,=,\ge\}$ row-wise. The graph has one \emph{constraint node} per row and one \emph{variable node} per column, with an edge of weight $A_{ij}$ wherever $A_{ij}\neq 0$. Following Ecole~\citep{prouvost2020ecole}, variable nodes carry $d_v=9$ solver-derived features and constraint nodes $d_c=1$ (the normalized right-hand side).

\begin{definition}[Feature-labeled bipartite MILP]
\label{def:milp}
A feature-labeled bipartite MILP is a tuple $M=(C,V,A,f_C,f_V)$ with constraint nodes $C$, variable nodes $V$, coefficient matrix $A\in\mathbb{R}^{|C|\times|V|}$, and per-node feature maps $f_C:C\to\mathbb{R}^{d_c}$ and $f_V:V\to\mathbb{R}^{d_v}$.
\end{definition}

\textbf{The 1-WL color-refinement test.} The 1-WL test refines node colors iteratively. Every node begins with a color equal to its feature vector. At each step, a node updates its color based on its current color, the \emph{collection} of its neighbors' colors, and the edge weights connecting them. This collection records \emph{which} colors appear and \emph{how many} times each, but not their order; it is a multiset, denoted by $\multiset{\cdot}$. Formally,
\begin{equation}\label{eq:wl}
\begin{aligned}
\chi^{(0)}(u)&=f(u),\\
\chi^{(t+1)}(u)&=\HASH\!\big(\chi^{(t)}(u),\,\multiset{(\chi^{(t)}(w),A_{uw}):w\sim u}\big),
\end{aligned}
\end{equation}
where $w\sim u$ ranges over the neighbors of $u$. After the colors stop changing, we read off the histogram of colors for the whole graph.

\begin{definition}[1-WL equivalence]
\label{def:wleq}
Two feature-labeled bipartite MILPs $M_1,M_2$ are \emph{1-WL equivalent}, written $M_1\equiv_{\mathrm{1WL}}M_2$, if their global color histograms agree at every iteration $t\ge 0$.
\end{definition}

Isomorphic graphs are always 1-WL equivalent, but the converse fails~\citep{cai1992lower}. Our running example is \emph{one long cycle} versus \emph{several short cycles}: every node has two neighbors, so refinement assigns one color everywhere and identical histograms, yet the connected-component count differs. This count is invisible to 1-WL~\citep{xu2019powerful, chen2023expressive} and is consequential for an MILP, since a connected instance can couple decisions, whereas a disconnected one keeps them independent.

\textbf{The hierarchical foundation-model encoder.} The encoder maps a feature-labeled bipartite MILP to an embedding $z\in\mathbb{R}^h$ through six stages, matching recent MILP foundation models: \emph{(i)} per-type input projections $\tilde V_0=W_v f_V$, $\tilde C_0=W_c f_C$; \emph{(ii)} \sgformer{}-style normalized linear self-attention within each node type (Eq.~\ref{eq:linattn}); \emph{(iii)} cross-attention between types, concatenated with a dense edge-weighted message $A^\top\tilde C_1$ before a linear projection (Eq.~\ref{eq:cross}); \emph{(iv)} a parallel Gasse-style bipartite message-passing branch~\citep{gasse2019exact}; \emph{(v)} a per-node convex fuse $\tilde V=\alpha\tilde V^{(g)}+(1-\alpha)\tilde V_2$, $\alpha\in[0,1]$; and \emph{(vi)} permutation-invariant pooling $z=\Pool(\tilde C\oplus\tilde V)$, $\Pool\in\{\mathrm{mean},\mathrm{sum},\mathrm{max}\}$ (full operator definitions in \Cref{app:proofs}). During pre-training, two \emph{virtual global nodes} (side-wise feature means joined to every opposite-side node) are appended.

Two embeddings are termed \emph{bit-identical} when $\|\Phi(M_1)-\Phi(M_2)\|_\infty\le 10^{-5}$ in the model's working precision; \Cref{sec:main} shows the observed gaps fall nine orders of magnitude below this threshold.

%% ================================================================
%%  4. THEORY
%% ================================================================
\section{The Hierarchical Encoder Is 1-WL Bounded}
\label{sec:theorem}

This section establishes the main result: the encoder of \Cref{sec:prelim} assigns identical embeddings to any two MILPs that 1-WL declares equivalent, for any weight setting. The argument is compositional. Each stage of the encoder is a symmetric function of a multiset of node colors (\Cref{lem:l1,lem:l2,lem:l3,lem:l4,lem:l5}), and a composition of such functions preserves 1-WL equivalence.

\begin{theorem}[The hierarchical encoder is 1-WL bounded]
\label{thm:main}
Let $M_1,M_2$ be feature-labeled bipartite MILPs with
$M_1\equiv_{\mathrm{1WL}}M_2$. For any weights $\theta$ of the hierarchical encoder $\Phi_\theta$ and any pooling in $\{\mathrm{mean},\mathrm{sum},\mathrm{max}\}$,
\begin{equation}
\Phi_\theta(M_1)=\Phi_\theta(M_2)
\end{equation}
The conclusion still holds when both inputs are augmented with the virtual global nodes used during pre-training.
\end{theorem}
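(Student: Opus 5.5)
We prove the stronger, node-level statement: the representation of every node at every stage is a function only of its stable 1-WL color, and this function is the same for $M_1$ and $M_2$. The theorem then follows because pooling is a symmetric function of the multiset of node representations, and 1-WL equivalence gives equal color histograms.

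\textbf{Setup.} Run color refinement jointly on the disjoint union $M_1\sqcup M_2$ until it stabilizes at a coloring $\chi^{\infty}$, with constraint and variable types kept in the initial color. Because the histograms of $M_1$ and $M_2$ agree at every iteration (\Cref{def:wleq}), every color class meets $M_1$ and $M_2$ in the same number of nodes. Stability means two facts hold. First, equally colored nodes have equal features. Second, for any colors $a,b$, all nodes of color $a$ see the same multiset of pairs (neighbor color $b$, edge weight). We call an operator \emph{1-WL bounded} if it maps a node representation that factors through $\chi^{\infty}$, say $h(u)=g(\chi^{\infty}(u))$, to another that factors through $\chi^{\infty}$, with the new $g'$ depending only on $g$, the weights, and the stable color statistics. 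Such operators compose.

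\textbf{Checking the stages.} We go through the six stages in order, using \Cref{lem:l1,lem:l2,lem:l3,lem:l4,lem:l5}.

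\begin{enumerate}
\item The input projections $W_vf_V$ and $W_cf_C$ act pointwise, so they are 1-WL bounded immediately.
\item The messages in Gasse-style message passing and in the dense term $A^\top\tilde C_1$ at node $v$ have the form $\sum_{w\sim v}A_{wv}\,\phi(h(w),h(v))$. This sum is a function of $h(v)$ and the multiset $\multiset{(\chi^\infty(w),A_{wv})}$, which stability fixes per color.
\item Edge-weighted cross-attention has the same structure: both the numerator and the normalizer are neighborhood sums of this type.
\item The convex fuse $\alpha\tilde V^{(g)}+(1-\alpha)\tilde V_2$ is pointwise in two representations, each of which factors through $\chi^\infty$.
\end{enumerate}

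\textbf{Main obstacle: global linear self-attention.} This stage is nonlocal. Every node attends to all nodes of its type, possibly across components, so stability alone does not control it. SGFormer-style attention computes global aggregates $\sum_w \phi(K h(w))V h(w)^\top$, $\sum_w \phi(K h(w))$, and Frobenius norms of $Q$ and $K$, and each node's output is a pointwise function of these aggregates and of $h(u)$. Each global aggregate is a symmetric function of the multiset $\multiset{h(w)}$ over one node type. Since $h$ factors through $\chi^\infty$, this multiset is determined by the color histogram.

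Here we use equality of histograms, not only of refinements. The global aggregates agree between $M_1$ and $M_2$, though generally not between non-equivalent graphs. With mean normalization we need equal node counts per type, and these follow from equal histograms at $t=0$. The aggregates are constants shared by both graphs, so the output again factors through $\chi^\infty$. Residual connections, normalization layers, and stacking multiple layers are handled by the same induction.

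\textbf{Pooling and virtual nodes.} Mean, sum, and max pooling over $\tilde C\oplus\tilde V$ are each functions of the multiset $\multiset{g(\chi^\infty(u))}$, and this multiset agrees between $M_1$ and $M_2$. For the virtual global nodes, each one carries the side-wise feature mean, which is a symmetric function of the multiset of features and therefore equal across the two graphs. Each virtual node is joined to every opposite-side node. Appending one such universal node per side preserves both stability and equal histograms: every original node gains one neighbor of a fixed, shared color, and each virtual node's neighbor multiset is determined by the histogram. Refining the augmented graphs therefore keeps $M_1^+\equiv_{\mathrm{1WL}}M_2^+$, and the argument above applies unchanged.
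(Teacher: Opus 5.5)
Your proof is correct and follows the paper's compositional, component-by-component argument (\Cref{lem:l1,lem:l2,lem:l3,lem:l4,lem:l5} composed through stages (i)--(vi)). Your invariant is a sharper version of the paper's bijection-based \Cref{def:bounded}: every node representation is a fixed function of its stable color in the joint refinement of $M_1\sqcup M_2$, and this makes explicit that stability controls the edge sums while equal histograms control the global attention sums, the $1/N$ factor, and the Frobenius norms. One small correction: the attention term of the cross-attention block \eqref{eq:cross} is the global $\LinAttn(\tilde V_1,\tilde C_1)$ over all nodes of the opposite type, not a neighborhood sum (only $A^\top\tilde C_1$ is local), but your global-aggregate paragraph handles it unchanged once the source multiset is taken over that type.
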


\paragraph{Proof strategy.}
The result does not follow from \citet{xu2019powerful} directly, since three components are not message-passing (global self-attention, cross-attention, virtual global nodes); each requires a separate argument, but all reduce to one mechanism. An operator is \emph{1-WL bounded} if, whenever its inputs agree as multisets of colors (a color-preserving bijection matches the nodes), its outputs agree as multisets under that bijection. 1-WL equivalence (\Cref{def:wleq}) supplies the input bijection, and a composition of 1-WL-bounded operators is 1-WL bounded, so it suffices to verify one component at a time.

\begin{definition}[1-WL bounded operator]
\label{def:bounded}
An operator $\Phi$ on node embeddings is \emph{1-WL bounded} if, whenever two input node sets agree as multisets (a bijection matches their colors), the output node sets agree as multisets under the same bijection.
\end{definition}

We establish one lemma per component (\Cref{lem:l1,lem:l2,lem:l3,lem:l4,lem:l5}; full statements and proofs in \Cref{app:proofs}): \textbf{(L1)} \sgformer{} linear self-attention \eqref{eq:linattn} depends on the source set only through the global sums $\sum_\ell k_\ell^\top v_\ell$ and $\sum_\ell k_\ell^\top\mathbf{1}$, both multiset functions; \textbf{(L2)} cross-attention with its edge message $m_i=\sum_j A_{ij}y_j$ \eqref{eq:cross} is a sum over the edge-weighted neighbourhood multiset that 1-WL refinement already matches; \textbf{(L3)} the Gasse-style GCN branch is an MPNN, hence 1-WL bounded by \citet{xu2019powerful} as specialized to MILPs by \citet{chen2023expressive}; \textbf{(L4)} the per-node convex fuse and mean/sum/max pooling are order-independent functions of the node multiset; and \textbf{(L5)} the two virtual global nodes add side-wise feature means under a fixed uniform edge pattern, preserving equivalence. Composing these, \Cref{thm:main} follows: the input bijection from \Cref{def:wleq} is preserved through stages (i)--(vi), so the pooled embeddings are equal.\hfill$\square$

% \paragraph{Consequences.}
\Cref{thm:main} holds for every weight setting, so the characterization is a property of the architecture rather than of a trained model. Because every stage is 1-WL bounded for the same reason, symmetric multiset aggregation, any encoder composed of per-node maps and symmetric aggregation admits the same characterization. A per-component ablation (\Cref{tab:lemma}, \Cref{app:proofs}) and a sub-layer analysis (\Cref{fig:layerwise}, \Cref{app:robust}) confirm that representation equivalence holds at every stage, while the positional-encoding control does not.

%% ================================================================
%%  5. PAIRS
%% ================================================================
\section{Construction of 1-WL-Equivalent Test Pairs}
\label{sec:pairs}
Testing \Cref{thm:main} requires pairs of MILPs that are non-isomorphic yet 1-WL equivalent. Constructing such pairs correctly is nontrivial, and an undetected error invalidates the experiment. This section gives a construction with a separation guarantee and the validity checks that every pair must pass.

% \paragraph{Validity check.}
Each pair is verified non-isomorphic \emph{as a feature-labeled bipartite graph}: since every encoder is permutation-equivariant, a matching-embeddings result on a secretly isomorphic pair would confirm only equivariance, not the 1-WL bound. We certify non-isomorphism by exact subgraph isomorphism~\citep{cordella2004sub} on small graphs and the (1-WL-invisible) connected-component count on large ones, and confirm each member's feasibility with an exact solver. For degree $d$ and scale $k$: $G_A$ is a \emph{connected} $d$-regular bipartite circulant on $kd{+}kd$ nodes (constraint $i$ joins variables $i,\dots,i{+}d{-}1\bmod kd$), while $G_B$ is $k$ disjoint copies of $K_{d,d}$. Both are $d$-regular with identical uniform features, so 1-WL refines each to a single color class and declares them equivalent; yet $G_A$ has one connected component and $G_B$ has $k$, so they are non-isomorphic. Component count is invisible to 1-WL, which is what makes this a valid test. We use two instances (\Cref{fig:pairs}): the cycle family ($d=2$: $C_{4k}$ vs.\ $k\cdot C_4$) and the cubic family ($d=3$ vs.\ $2\cdot K_{3,3}$); the latter controls for a cycle-specific artifact, having different regularity, girth, and refinement mechanism while exhibiting the same representation equivalence. Re-instantiating the same topology under standard MILP feature distributions (set cover, combinatorial auctions, bipartite matching, multi-knapsack blocks), and under heterogeneous and gadget-union variants carrying integer variables, finite bounds, non-zero objectives, and node-varying features, still yields bit-identical embeddings across all six encoders, controlling for the zero-feature case (\Cref{tab:natural}, \Cref{app:robust}). Node-varying features are assigned so that 1-WL equivalence is preserved by construction: features are drawn from a fixed alphabet and placed identically along the two topologies under the color-preserving bijection guaranteed by 1-WL equivalence, so that constraint~$i$ of $G_A$ and its image in $G_B$ receive the same feature value. Concretely, since 1-WL refines both $G_A$ and $G_B$ to a single color class (all nodes on a side are structurally interchangeable), any assignment that is constant within a side, or any assignment transported across the bijection, leaves the two color histograms identical at every refinement step and therefore keeps the pair 1-WL equivalent while making the features non-trivial. We verify equivalence directly by recomputing the 1-WL color histograms after feature assignment for every pair (a check that fails immediately if a feature placement breaks the symmetry). WL-twin nodes are moreover common in random instances (exact rates in \Cref{fig:prevalence}, \Cref{app:figs}), so the characterization applies whenever an instance carries a regular or symmetric substructure, which standard families routinely do.

\section{Empirical Verification}
\label{sec:main}
\label{sec:setup}
We evaluate ten graph encoders spanning both the analyzed foundation-model family and independently developed architectures. The foundation-model group includes the pre-trained \sgformer{}+GCN checkpoint, its randomly initialized counterpart, attention-only and GCN-only ablations, a minimal GCN, and the full hierarchical model. The independent baselines comprise Graphormer-~\citep{ying2021graphormer}, GraphGPS-~\citep{rampasek2022gps}, Set Transformer-~\citep{lee2019set}, and the bipartite GCN of~\citet{gasse2019exact}. Each encoder is evaluated under six input configurations: the baseline encoding, virtual-global-node augmentation, \rni{}~\citep{abboud2020rni}, LP-derived features, \rwpe{}~\citep{dwivedi2022rwpe}, and \lappe{}~\citep{dwivedi2021lap}. Complete implementation details, transform definitions, and reproducibility information are provided in \Cref{app:expdetails,app:repro}.

Our primary evaluation uses the 15 graph pairs introduced in \Cref{sec:pairs}, consisting of fourteen cycle-family instances and one cubic-graph pair. For each encoder and graph pair, we measure cosine similarity, exact-match rate (EM), defined as the fraction of pairs satisfying $|\Phi(M_A)-\Phi(M_B)|_\infty \leq 10^{-5}$, and the maximum coordinate-wise embedding difference. Experiments are repeated across five random seeds. To assess robustness, we additionally vary model capacity (width 16--256, depth 1--8, and 1--4 attention heads), graph size (up to 800 nodes), pooling strategy (mean, sum, and max), and numerical precision (float32 and float64). As specificity controls, we evaluate the same encoders on 1-WL-distinguishable graph pairs and on the heterogeneous and gadget-union constructions described in \Cref{sec:pairs}.

\begin{table}[t]
\centering
\caption{Bit-identical embeddings across ten architectures at baseline (mean $\pm$ std over five seeds). EM is the exact-match rate, the fraction of 15 pairs with $\|\Phi(M_A)-\Phi(M_B)\|_\infty\le10^{-5}$; ``worst $\|\cdot\|_\infty$'' is the largest coordinate gap over all pairs and seeds (float32). Top block: the foundation-model family; bottom block: independently designed encoders.}
\label{tab:main}
\small
\begin{tabular}{@{}lrccc@{}}
\toprule
Encoder & params & mean cos & worst $\|\cdot\|_\infty$ & EM \\
\midrule
\sgformer{}+GCN (pre-trained) & 6{,}820 & $1.000000\pm0$ & $6.0\times10^{-8}$ & 1.00 \\
\sgformer{}+GCN (random) & 6{,}820 & $1.000000\pm0$ & $6.0\times10^{-8}$ & 1.00 \\
attention-only & 1{,}344 & $1.000000\pm0$ & $3.7\times10^{-8}$ & 1.00 \\
GCN-only & 5{,}602 & $1.000000\pm0$ & $4.5\times10^{-8}$ & 1.00 \\
minimal GCN & 736 & $1.000000\pm0$ & $2.1\times10^{-8}$ & 1.00 \\
hierarchical (full) & 10{,}852 & $1.000000\pm0$ & $6.0\times10^{-8}$ & 1.00 \\
\midrule
Graphormer-style & 21{,}616 & $1.000000\pm0$ & $5.4\times10^{-8}$ & 1.00 \\
GraphGPS-style & 21{,}568 & $1.000000\pm0$ & $5.1\times10^{-8}$ & 1.00 \\
Set-Transformer pooling & 13{,}216 & $1.000000\pm0$ & $4.9\times10^{-8}$ & 1.00 \\
Gasse-2019 bipartite GCN & 25{,}600 & $1.000000\pm0$ & $5.96\times10^{-8}$ & 1.00 \\
\bottomrule
\end{tabular}
\end{table}

Across all ten encoders and all 15 graph pairs, the resulting graph embeddings are identical up to floating-point precision. Every architecture achieves a mean cosine similarity of $1.000000$ (standard deviation below float32 resolution across five seeds) and an exact-match rate of $1.00$ (\Cref{tab:main}). The largest coordinate-wise discrepancy observed is $5.96\times10^{-8}$ in float32 and $5.6\times10^{-17}$ in float64, with the latter corresponding to machine-precision numerical error and remaining several orders of magnitude below the matching threshold (\Cref{fig:identity}; \Cref{app:robust}). The observed equivalence remains unchanged across all capacity settings, graph scales, and pooling operators. Increasing model size from approximately $10^4$ to $2.5\times10^6$ parameters does not alter the result, nor does scaling the graph instances to 800 nodes (\Cref{fig:capacity}; \Cref{app:robust}). In contrast, structural input encodings such as \rwpe{} consistently produce distinguishable representations, providing a positive control for the evaluation procedure. The specificity controls further demonstrate that the encoders do not collapse all inputs to a common representation. For graph pairs that are distinguishable by 1-WL, all architectures produce distinct embeddings. Likewise, the heterogeneous and gadget-union constructions remain consistent with the theoretical predictions. Together, these results provide empirical support for the representational equivalence characterized by \Cref{thm:main} across a broad range of architectures, capacities, and implementation choices. \Cref{fig:main} (\Cref{app:figs}) shows the full encoder$\times$transform grid underlying \Cref{tab:main}.

%% ================================================================
%%  8. PROBES
%% ================================================================
\section{Probing Information Content of the Embeddings}
\label{sec:probes}
To characterize the information retained by the learned representations, we freeze each encoder and train both linear and MLP probes to predict graph-level invariants from the resulting embeddings. We evaluate two populations: (i) 300 randomly generated bipartite MILPs, which serve as a positive control, and (ii) a population of 1-WL-equivalent instances. The prediction targets include the number of connected components, the number of $4$-cycles, and the spectral gap. As a control for probe capacity, we repeat the analysis using \rwpe{}-augmented embeddings of the same instances. The results reveal a consistent difference between the two populations. For the connected-components target, probe performance reaches $R^2=0.23\pm0.04$ on random MILPs but drops to $R^2=-0.01\pm0.02$ on the 1-WL-equivalent population (mean $\pm$ std over five seeds), despite substantially larger target variance in the latter ($26.37$ versus $0.22$). Similar behavior is observed for $4$-cycle counts and spectral-gap prediction. When the same probes are applied to \rwpe{}-augmented embeddings, performance improves substantially; for example, the connected-components probe reaches $R^2=0.51\pm0.05$. Complete results, target variances, and control experiments are reported in \Cref{tab:probes} (\Cref{app:probes}), with the full probe battery shown in \Cref{fig:probes}. These findings indicate that graph invariants varying within a 1-WL equivalence class are not reliably recoverable from the baseline representations, even when the target exhibits substantial variation. In contrast, representations augmented with structural positional information retain sufficient information for downstream recovery of the same targets. The observed pattern is consistent across both linear and nonlinear probes and aligns with the expressiveness characterization established in \Cref{thm:main}.

%% ================================================================
%%  9. RWPE
%% ================================================================
\section{Localizing the Source of Expressiveness}
\label{sec:rwpe}
To determine whether the observed representation equivalence can be overcome through optimization alone, we jointly train the full encoder and a binary classification head to distinguish $G_A$ from $G_B$ across the 15 constructed pairs. We evaluate the baseline model together with four input augmentations: \rni{}, \rwpe{}, \lappe{}, and LP-derived features. In addition, we analyze the effect of the \rwpe{} walk length and search for 1-WL-equivalent non-isomorphic graph pairs that remain indistinguishable under \rwpe{}. Under the baseline encoding, training converges to the binary cross-entropy optimum for random guessing, with a final loss of $0.69314748 \pm 4.8 \times 10^{-7}$ across five seeds. The resulting graph embeddings remain bit-identical after optimization (\Cref{fig:plateau}; \Cref{app:figs}), indicating that the constructed pairs are not separable through training alone. In contrast, \rwpe{} separates the embeddings for all evaluated encoders and reduces the classification loss to approximately $8 \times 10^{-4}$.

The separation induced by \rwpe{} is consistent with the underlying random-walk statistics of the constructed graphs. Specifically, the $4$-step return probability differs between $C_{4k}$ and $k \cdot C_4$ ($P^4[i,i]=3/8$ versus $1/2$), yielding a shortest distinguishing walk length of $L=4$. A walk-length sweep confirms this prediction: mean cosine similarity remains $1.0000$ for $L \in {2,3}$, decreases to $0.984$ at $L=4$, and further decreases to $0.940$ at $L=16$. Similar separation is obtained with \rni{} and \lappe{}, whereas LP-derived features leave most pairs indistinguishable (\Cref{tab:enc}; \Cref{app:robust}; \Cref{fig:encoding}; \Cref{app:figs}). The effectiveness of \rwpe{} is nevertheless limited. We identify 1-WL-equivalent non-isomorphic graph pairs that remain indistinguishable under finite-length random-walk encodings. For example, a pair with $m=10$ and $d=4$ (offset sets ${0,1,2,5}$ and ${0,1,3,5}$) is not separated by walk lengths ${4,6,8}$, and a second pair with $m=12$ and $d=4$ remains indistinguishable even when all even walk lengths up to $16$ are included. These examples are cospectral with respect to the corresponding random-walk statistics, indicating that finite-length \rwpe{} does not eliminate all 1-WL equivalences.

%% ================================================================
%%  10. DISCUSSION
%% ================================================================
\section{Downstream Consequences and Diagnostic}
\label{sec:consequences}
The representational invariance established by Theorem~\ref{thm:main} has direct implications for downstream predictors operating on graph embeddings. To evaluate these implications, we consider a connectivity-classification task in which the target depends on the number of connected components and therefore varies within a 1-WL equivalence class. We train embedding-only classifiers using the representations produced by each of the six foundation-family encoders under both the baseline input encoding and the \rwpe{} augmentation. In addition, we quantify the prevalence of WL-twin nodes in randomly generated instances from standard MILP families. Under the baseline encoding, connectivity prediction remains near chance, with accuracies ranging from $0.34\pm0.03$ to $0.39\pm0.04$ across all encoders (mean $\pm$ std over five seeds). Introducing \rwpe{} substantially improves performance, yielding near-perfect accuracy ($0.97$--$1.00$) for five of the six architectures; the exception is the GCN-only ablation, which does not effectively propagate the positional signals to the pooled representation. Figure~\ref{fig:footprint} (\Cref{app:figs}) summarizes these results together with the separation induced by different input encodings. The degeneracy is not restricted to the constructed pairs: across random instances from standard MILP families, WL-twin nodes occur in roughly $6\%$--$12\%$ of cases.

\section{Discussion}
\label{sec:discussion}
The theoretical and empirical results jointly indicate that the hierarchical graph encoders studied here do not exceed 1-WL on the baseline MILP representation. The equivalence is invariant to parameters, capacity, and optimization, indicating that it arises from the combination of per-node transformations and symmetric aggregation rather than from a particular training configuration. 

Several graph properties relevant to mixed-integer programming, including connectivity, cycle structure, and other invariants identified in the MILP-learning literature~\citep{chen2023expressive}, may vary within a 1-WL equivalence class, and the consequence is concrete. \citet{chen2023expressive} exhibit two bipartite MILPs that 1-WL cannot separate yet that differ in \emph{feasibility}; any 1-WL-bounded encoder assigns them the same embedding, so a feasibility classifier built on it cannot be correct on both, regardless of data or capacity. The same holds for any non-1-WL-invariant target: connected-component count governs whether an instance decomposes into independent subproblems, and constraint-graph cycle structure relates to LP-relaxation strength. When such a quantity is the target or a needed feature, a collapsed embedding caps accuracy at that of a predictor ignoring the distinguishing structure; the probing and downstream experiments make this cap visible, while a control that adds a distinguishing signal recovers the targets. Structural positional encodings are one such mechanism: random-walk and Laplacian encodings separate the constructed pairs, whereas LP-derived features generally do not. Gains attributed to graph-transformer architectures should therefore be read together with the expressive contribution of the input representation. The characterization covers architectures built from per-node transformations and symmetric multiset aggregation, including the hierarchical encoder studied here and several widely used graph-transformer variants; architectures that add shortest-path features, higher-order message passing, or other non-local mechanisms fall outside this characterization. The empirical study targets exact 1-WL-equivalent constructions; extending it to approximate symmetries and large-scale solver-labeled benchmarks is left to future work (\Cref{app:threats}).

\section{Conclusion}
\label{sec:conclusion}
We characterized a class of hierarchical global-attention graph encoders for MILPs as 1-WL bounded: for every parameter setting, they assign identical embeddings to 1-WL-equivalent instances, verified across ten encoders and several graph families, with practical expressiveness on these instances set by the input representation rather than the attention mechanism. We release an encoder-agnostic diagnostic for detecting 1-WL-induced representation equivalence in graph embeddings.

\bibliography{main}

\appendix

\section{Overview and Construction Figures}
\label{app:overview}

\begin{figure}[t]
\centering
\includegraphics[width=0.8\columnwidth]{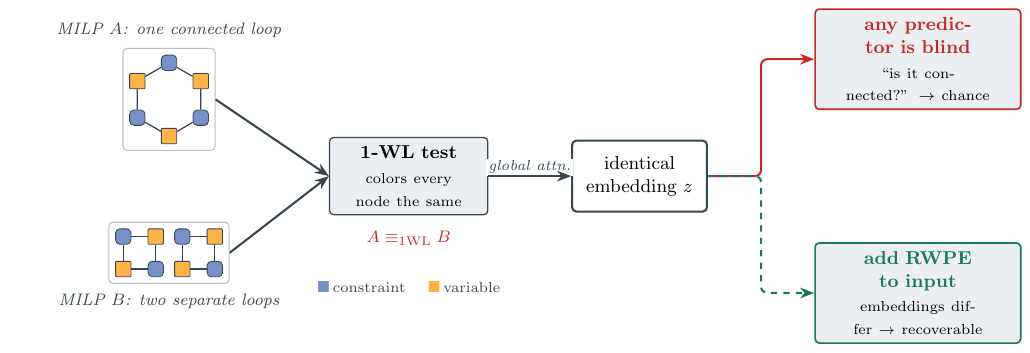}
\caption{Schematic of the main characterization. Two non-isomorphic MILPs, one whose bipartite graph is a single connected component ($A$) and one that decomposes into separate components ($B$), are 1-WL equivalent because every node has the same refined color. A hierarchical global-attention encoder maps $A$ and $B$ to the same embedding $z$ for any parameter setting (\Cref{thm:main}). Any predictor that consumes only $z$ therefore assigns the same output to both, including for invariants such as connectivity that differ between the two. The additional expressiveness required to separate them is provided by the input encoding rather than the attention mechanism: a random-walk positional encoding (RWPE) distinguishes the two embeddings.}
\label{fig:hero}
\end{figure}

\Cref{fig:hero} provides a schematic overview of the main characterization: two 1-WL-equivalent non-isomorphic MILP graphs receive the same embedding, any predictor consuming that embedding assigns them the same output, and a random-walk positional encoding separates them. \Cref{fig:pairs} shows the two 1-WL-equivalent test families used in \Cref{sec:pairs}.

\begin{figure}[t]
\centering
\includegraphics[width=0.7\textwidth]{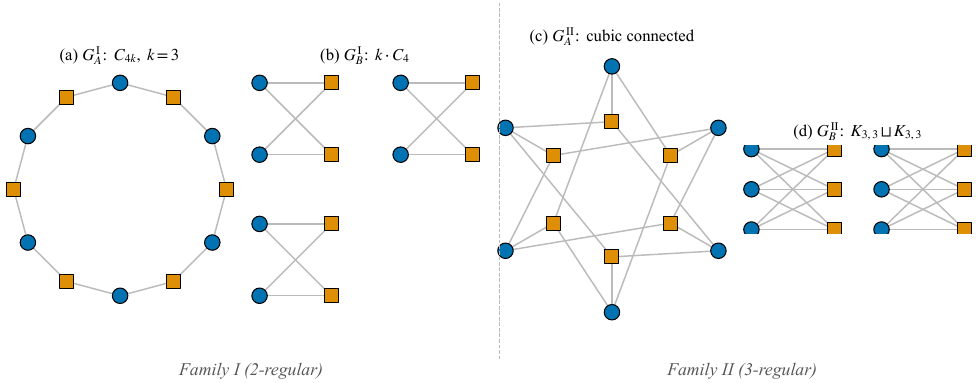}
\caption{The two 1-WL-equivalent, non-isomorphic test families of \Cref{sec:pairs}. \emph{Family I (cycles):} one long cycle $C_{4k}$ versus $k$ short cycles $k\cdot C_4$. \emph{Family II (cubic):} a connected $3$-regular bipartite graph versus $2\cdot K_{3,3}$. Both graphs in each pair are regular and have identical features, so 1-WL assigns a single color to every vertex, yet the pair differs in the number of connected components.}
\label{fig:pairs}
\end{figure}

\section{Reproducibility and Robustness}
\label{app:repro}

All experiments are completed within minutes on a laptop CPU and require neither GPUs nor cluster computing. We provide a public code artifact containing the seeded, automated pipeline used to produce all reported results. For each graph pair, the accompanying test suite verifies equality of node and edge counts, degree sequences, and 1-WL color histograms, and confirms non-isomorphism and the intended difference in connected-component structure. Non-isomorphism is verified using an exact graph-isomorphism test for small instances and the differing component counts for large instances. The test suite additionally checks permutation equivariance of the encoder, deterministic execution, reproducibility of the bit-identity result, and the specificity control described in \Cref{sec:main}. The principal results are stable across five random seeds: without \rwpe{}, the fine-tuning cross-entropy remains equal to $\ln 2$ to seven decimal places, whereas with \rwpe{} it decreases to approximately $8\times10^{-4}$. The bit-identity result holds up to floating-point precision in both float32 (maximum deviation $\leq 5.96\times10^{-8}$) and float64 ($\leq 5.6\times10^{-17}$), including for graphs with up to 800 nodes.

\section{Lemmas and Detailed Proofs}
\label{app:proofs}
This appendix states the five lemmas underlying \Cref{thm:main} with full proofs, and reports the component ablation (\Cref{tab:lemma}) that matches each lemma to an empirical test.

\paragraph{Encoder attention operations.}
The intra-type self-attention of stage (ii) is \sgformer{}-style normalized linear attention: for a query set $Q$ and source set $X$,
\begin{equation}\label{eq:linattn}
\begin{aligned}
\LinAttn(Q,X)_i &= D_i^{-1}\!\Big(V_i+\tfrac{1}{N}\,\tilde Q_i\,(\tilde K^\top V)\Big),\\
D_i &= 1+\tfrac{1}{N}\,\tilde Q_i\,(\tilde K^\top\mathbf{1}),
\end{aligned}
\end{equation}
with Frobenius-normalized $\tilde Q=W_qQ/\|W_qQ\|_F$ and $\tilde K=K/\|K\|_F$, computed in $O(N)$ time via the global statistics $\tilde K^\top V$ and $\tilde K^\top\mathbf{1}$. The cross-attention of stage (iii) concatenates the attention output with a dense edge-weighted message before a linear projection; for the constraint-to-variable pass,
\begin{equation}\label{eq:cross}
\tilde V_2 = W_o\big[\,\LinAttn(\tilde V_1,\tilde C_1);\ A^\top\tilde C_1\,\big],
\end{equation}
and symmetrically the other way.

\begin{corollary}[Indistinguishable instance families]
\label{cor:concrete}
The encoder cannot distinguish the connected bipartite cycle $C_{4k}$ from $k$ disjoint $4$-cycles $k\cdot C_4$ for any $k\ge2$, nor the connected $3$-regular bipartite graph on $6{+}6$ nodes from two disjoint copies of $K_{3,3}$, although each pair is non-isomorphic and differs in the number of connected components.
\end{corollary}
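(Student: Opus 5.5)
The corollary follows from \Cref{thm:main} once each pair is shown to be 1-WL equivalent (\Cref{def:wleq}) and non-isomorphic as a feature-labeled bipartite graph. The encoder then maps both members to the same embedding for every $\theta$ and every pooling, so it cannot distinguish them. The plan is to verify these two properties for each family and then invoke the theorem. The argument also applies under virtual-node augmentation, by the last clause of \Cref{thm:main}.

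\emph{Step 1: 1-WL equivalence.} First fix a common feature assignment. All constraint nodes carry the same feature vector $f_C\equiv c$, and all variable nodes carry $f_V\equiv v$, in both graphs. Take unit edge weights $A_{ij}=1$ on every edge. One could allow any common weight, or more generally a constant weight per side.

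We show by induction on $t$ that in each graph every constraint node has the same color $\chi^{(t)}_C$ and every variable node has the same color $\chi^{(t)}_V$, with these colors equal across the two graphs. The base case $t=0$ holds by construction.

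For the inductive step, consider a constraint node. Its neighbor multiset is $d$ copies of $(\chi^{(t)}_V,1)$, because both graphs are $d$-regular bipartite. For $C_{4k}$ versus $k\cdot C_4$ we have $d=2$. For the cubic family, $d=3$: the circulant on $6{+}6$ nodes (constraint $i$ joined to variables $i,i{+}1,i{+}2 \bmod 6$) is compared with $2\cdot K_{3,3}$. By the inductive hypothesis, the argument of $\HASH$ is identical for every constraint node in both graphs, and symmetrically for variable nodes.

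Both graphs have the same side sizes: $2k+2k$ for the cycle family and $6+6$ for the cubic family. Hence the histograms coincide at every $t$.

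\emph{Step 2: non-isomorphism.} Any isomorphism of feature-labeled graphs is in particular a graph isomorphism, so it preserves the number of connected components. It therefore suffices to count components.

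For the cycle family, $C_{4k}$ is connected by definition, while $k\cdot C_4$ has $k\ge2$ components.

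For the cubic family, connectivity of the circulant needs a short argument. Constraint $i$ is adjacent to variables $i$ and $i{+}1$, and variable $i{+}1$ is adjacent to constraint $i{+}1$. So constraints $i$ and $i{+}1$ lie in the same component for every $i$. Hence all constraints lie in one component, and every variable is adjacent to some constraint. So the graph has one component, whereas $2\cdot K_{3,3}$ has two.

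The same shift argument shows that the circulant with $d=2$ on $2k+2k$ nodes is a single $4k$-cycle. This identifies it with $C_{4k}$ as described in \Cref{sec:pairs}.

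\emph{Step 3: conclusion, and the main obstacle.} Applying \Cref{thm:main} to each pair gives $\Phi_\theta(G_A)=\Phi_\theta(G_B)$ for all $\theta$ and all three poolings.

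The main point needing care is the modelling in Step 1. The statement concerns graphs, while \Cref{thm:main} concerns feature-labeled MILPs, so we must fix features and edge weights that are uniform per side and identical across the pair. If features are nonuniform, Step 1 has to be replaced by transporting them along a color-preserving bijection, as described in \Cref{sec:pairs}. In that case 1-WL equivalence has to be rechecked rather than obtained from regularity alone.

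The degree-regularity induction itself is routine. The only graph-theoretic content is the connectivity of the circulant, handled by the shift argument in Step 2.
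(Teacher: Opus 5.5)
Your proposal is correct and follows the paper's route: the corollary comes from applying \Cref{thm:main} to the pairs of \Cref{sec:pairs}. The paper certifies those pairs as 1-WL equivalent because both graphs are $d$-regular with identical uniform features, and as non-isomorphic because their connected-component counts differ, and your proof makes explicit the per-side induction on the color refinement and the shift argument for connectivity of the circulant, which the paper only asserts.
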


\begin{table}[t]
\centering
\caption{Each lemma maps to an ablation that isolates one component of the encoder. The exact-match rate (EM) is the fraction of the 15 test pairs mapped to bit-identical embeddings. All components are bit-identical, matching the theory.}
\label{tab:lemma}
\small
\begin{tabular}{@{}llc@{}}
\toprule
Lemma & Component isolated by the ablation & EM \\
\midrule
\Cref{lem:l1} & linear self-attention (attention-only) & 1.00 \\
\Cref{lem:l2} & cross-attention + edge messages (full) & 1.00 \\
\Cref{lem:l3} & bipartite GCN branch (GCN-only) & 1.00 \\
\Cref{lem:l4} & fuse + pooling (mean/sum/max) & 1.00 \\
\Cref{lem:l5} & virtual-global-node augmentation & 1.00 \\
\bottomrule
\end{tabular}
\end{table}

\begin{lemma}[Linear self-attention is a multiset function]
\label{lem:l1}
\sgformer{} normalized linear attention at node $i$ in \eqref{eq:linattn} depends only on the pair $(q_i,v_i)$ at that node and on the two global sums $\sum_\ell k_\ell^\top v_\ell$ and $\sum_\ell k_\ell^\top\mathbf{1}$. Both sums are order-independent functions of the multiset $\multiset{(k_\ell,v_\ell)}$ of the source set, and the Frobenius normalization depends only on $\|Q\|_F$, itself a multiset function. Hence, linear attention is 1-WL bounded over the query set.
\end{lemma}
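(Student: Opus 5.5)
Since $\LinAttn(Q,X)_i$ in \eqref{eq:linattn} is written entirely in closed form, the plan is to unfold it and show that every ingredient is either a function of the row at node $i$ alone or a sum over rows (hence invariant under reordering). Write $q_i=(W_qQ)_i$, $k_\ell=(W_kX)_\ell$, $v_\ell=(W_vX)_\ell$ for per-node linear maps of the inputs; these are per-node maps, so they commute with any bijection of nodes. The first step is to expand the two matrix products: $\tilde K^\top V=\|K\|_F^{-1}\sum_\ell k_\ell^\top v_\ell$ and $\tilde K^\top\mathbf 1=\|K\|_F^{-1}\sum_\ell k_\ell^\top$, and similarly $\|W_qQ\|_F^2=\sum_i\|q_i\|^2$, $\|K\|_F^2=\sum_\ell\|k_\ell\|^2$, and $N$ is the cardinality of the source multiset. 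Each of these is a sum over nodes of a fixed function of that node's row, and hence a function of the multiset $\multiset{(k_\ell,v_\ell)}$ (resp.\ $\multiset{q_i}$). Substituting back, the output at node $i$ equals $F(q_i,v_i;S)$ for a fixed map $F$ and a global statistic $S=(N,\|Q\|_F,\|K\|_F,\sum_\ell k_\ell^\top v_\ell,\sum_\ell k_\ell^\top)$.

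The second step converts this to the 1-WL-boundedness of \Cref{def:bounded}. Suppose the input node sets of two graphs $M_1,M_2$ are matched by a color-preserving bijection $\pi$, so that the row of node $i$ in $M_1$ equals the row of $\pi(i)$ in $M_2$ (for both the query and the source sets, which are same-type node sets here). Then the multisets coincide, so $S(M_1)=S(M_2)$, and per-node quantities agree: $q_i=q'_{\pi(i)}$, $v_i=v'_{\pi(i)}$. Hence $\LinAttn_i(M_1)=F(q_i,v_i;S)=\LinAttn_{\pi(i)}(M_2)$, i.e.\ the output multisets agree under the same bijection. Nonnegativity or positivity of $D_i$ is not needed: whatever value $D_i$ takes, it is the same on both sides, so the argument is purely equational. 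One edge case to note is $\|K\|_F=0$ or $\|W_qQ\|_F=0$; I would adopt whatever convention the implementation uses (e.g.\ an $\epsilon$ in the denominator), which is again a function of the same sums.

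The main subtlety, more than an obstacle, is making precise that \enquote{agreeing as multisets} means node-wise equality of the real feature vectors under $\pi$, not merely equality of discrete 1-WL colors. At the input layer this holds because 1-WL colors start from the features themselves (\eqref{eq:wl}). At later layers it is supplied inductively by the preceding lemmas in the composition, together with the fact that the query and source sets are drawn from the same matched node sets. With this convention fixed, the lemma is a direct computation.
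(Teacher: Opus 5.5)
Your proposal is correct and follows the paper's argument. You unfold \eqref{eq:linattn}, observe that the source set enters only through the global sums $\sum_\ell k_\ell^\top v_\ell$ and $\sum_\ell k_\ell^\top$, and conclude that matched query nodes produce matched outputs; your version is simply more explicit, tracking both normalizers $\|W_qQ\|_F$ and $\|K\|_F$ and the count $N$ in the global statistic, and noting that the composition must carry node-wise equality of real-valued embeddings rather than only discrete colors.
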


\begin{proof}
The factored numerator $\tilde Q_i(\tilde K^\top V)$ and denominator $\tilde Q_i(\tilde K^\top\mathbf{1})$ contain the source set only through the sums $\sum_\ell k_\ell^\top v_\ell$ and $\sum_\ell k_\ell^\top\mathbf{1}$, which are invariant to any reordering of the source nodes. The only node-specific dependence is through $(q_i,v_i)$. If two query sets agree as multisets and their source sets agree as multisets, then matched query nodes produce matched outputs.
\end{proof}

\begin{lemma}[Cross-attention and edge messages preserve equivalence]
\label{lem:l2}
The cross-attention block \eqref{eq:cross} is 1-WL bounded. The added edge message $m_i=\sum_j A_{ij}\,y_j$ is an order-independent function of the edge-weighted neighborhood. Since 1-WL equivalence supplies a bijection preserving the weight--color pairs $(A_{ij},\chi(j))$, the message $m_i$ matches across the two graphs, and the concatenation and per-node projection preserve the match.
\end{lemma}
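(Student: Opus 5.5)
The plan is to decompose the cross-attention block \eqref{eq:cross} into its two concatenated parts and show that each is 1-WL bounded in the sense of \Cref{def:bounded}. Throughout, we work with the bijection $\pi$ between node sets supplied by 1-WL equivalence. Concretely, we fix stable colorings $\chi_1,\chi_2$, and from \Cref{def:wleq} we use that the stable partitions have equal class sizes for every color. We then choose $\pi$ to be color-preserving on each side, mapping constraints to constraints and variables to variables. The inductive hypothesis is that the incoming representations $\tilde C_1,\tilde V_1$ are functions of stable colors only: $\tilde V_1(j)=g(\chi(j))$ and $\tilde C_1(i)=h(\chi(i))$, with the same $g,h$ on both graphs. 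This is the form in which earlier stages should be propagated. It is stronger than "agree as multisets" but is what the composition actually needs. We would remark that \Cref{lem:l1} in fact yields this form.

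\textbf{Attention part.} $\LinAttn(\tilde V_1,\tilde C_1)$ has queries on the variable side and sources on the constraint side. By \Cref{lem:l1}, its output at variable $j$ depends only on $(q_j,v_j)$ together with the global quantities $\sum_\ell k_\ell^\top v_\ell$, $\sum_\ell k_\ell^\top\mathbf{1}$, $\|W_q\tilde V_1\|_F$, $\|K\|_F$ and $N$. Each global quantity is a sum over nodes of a function of the node's color. Because the color histograms agree, these sums coincide across $M_1$ and $M_2$. Hence the attention output is again a fixed function of $\chi(j)$.

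\textbf{Edge message.} For a variable $j$ we have $(A^\top\tilde C_1)_j=\sum_i A_{ij}\,h(\chi(i))$. This sum is determined by the multiset $\multiset{(\chi(i),A_{ij}): i\sim j}$. By stability of the coloring under the refinement \eqref{eq:wl}, this multiset is a function of $\chi(j)$ itself, because nodes with equal stable color have equal next-round hash and hence equal weighted-neighbor multisets. Non-edges contribute $A_{ij}=0$, so the dense product equals the sparse neighborhood sum. The message is therefore $m(\chi(j))$, the same function on both graphs.

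\textbf{Assembly.} Concatenation, followed by the shared linear map $W_o$, gives $\tilde V_2(j)=W_o[\,a(\chi(j));m(\chi(j))\,]$, which is a function of color alone. Consequently $\tilde V_2(\pi(j))=\tilde V_2(j)$ under the bijection. The constraint-side pass, which uses $A\tilde V_1$ and queries from $\tilde C_1$, is handled symmetrically.

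The main obstacle is the step that converts "histograms agree" into "stable colors determine the weighted neighbor multiset identically in both graphs". The HASH in \eqref{eq:wl} must be injective and shared across the two graphs, and refinement has to be run jointly, or equivalently on the disjoint union, so that a color denotes the same structural type in both. We would state this convention explicitly and note that equal histograms at every $t$ then imply equal stable colorings on the union, from which the argument above follows.
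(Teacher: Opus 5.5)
Your proposal is correct and follows the paper's proof. The attention part is reduced to \Cref{lem:l1}. The edge message $\sum_j A_{ij}y_j$ is handled by observing that the refinement \eqref{eq:wl} already hashes the weighted neighbor multiset $\multiset{(\chi(j),A_{ij})}$, with non-edges contributing zero to the dense product. Concatenation with the shared $W_o$ then acts per node.

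What you add is the explicit invariant that $\tilde C_1,\tilde V_1$ are a common function of jointly refined stable colors under a shared injective $\HASH$. The paper's proof uses this tacitly when it concludes that matched nodes receive the same $m_i$. The multiset-level \Cref{def:bounded} alone would not supply it for a graph-dependent edge term, so making it explicit tightens the argument rather than changing it.
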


\begin{proof}
The attention part is 1-WL bounded by \Cref{lem:l1}. For the edge term, the very definition of 1-WL refinement \eqref{eq:wl} hashes the multiset $\multiset{(\chi(j),A_{ij})}$; two 1-WL-equivalent nodes therefore have a bijection between their incident edges preserving both the weight $A_{ij}$ and the neighbor color $\chi(j)$. Because $m_i=\sum_j A_{ij}y_j$ is a sum over that multiset, the two nodes get the same $m_i$. Concatenation and the linear projection $W_o$ act per node, so they preserve the match.
\end{proof}

\begin{lemma}[The bipartite GCN branch is 1-WL bounded]
\label{lem:l3}
The Gasse-style bipartite message-passing branch is a message-passing neural network (MPNN), and is therefore 1-WL bounded by the results of Xu et al.~\citep{xu2019powerful} as specialized to bipartite MILPs by Chen et al.~\citep{chen2023expressive}.
\end{lemma}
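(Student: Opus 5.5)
We establish \Cref{lem:l3} by writing the Gasse-style branch in explicit MPNN form and showing, by induction on the layer index, that each layer is a function of the 1-WL color. We do not invoke \citet{xu2019powerful} as a black box.

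\textbf{Setting up the form.} The branch alternates two half-convolutions. A constraint-side update is
\[
\tilde C^{(t+1)}_i=\psi_C\big(\tilde C^{(t)}_i,\textstyle\sum_{j\sim i}\phi_C(\tilde C^{(t)}_i,\tilde V^{(t)}_j,A_{ij})\big).
\]
A variable-side update $\tilde V^{(t+1)}_j=\psi_V(\cdots)$ is defined symmetrically and uses the freshly updated constraint states. Here $\phi,\psi$ are per-node MLPs, possibly with layer normalization applied per node. This is the form in \citet{gasse2019exact}, and it matches the message-passing class of \citet{chen2023expressive}.

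\textbf{Inductive invariant.} Write $\chi^{(s)}$ for the colors of \eqref{eq:wl} run on the bipartite graph. The invariant is that for every layer $t$ there is a function $g_t$ with
\[
h^{(t)}(u)=g_t\big(\chi^{(s(t))}(u)\big)
\]
for every node $u$ of either graph, where $s(t)$ is the number of refinement rounds consumed so far. The index $s(t)$ counts each half-convolution as one round, because a variable update sees constraint states already refined once.

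\textbf{Base case.} At $t=0$ the embeddings are per-type projections of $f$. Since $\chi^{(0)}=f$, the base case holds directly. Node type is recoverable from the color, because $d_c\neq d_v$ or because we prepend a type tag.

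\textbf{Inductive step.} The message sum at $u$ depends only on $h(u)$ and on the multiset $\multiset{(h(w),A_{uw}):w\sim u}$. By the hypothesis, this multiset is the image of $\multiset{(\chi(w),A_{uw})}$ under a fixed map. That multiset, together with $\chi(u)$, is exactly the argument of $\HASH$ in \eqref{eq:wl}, and $\HASH$ is injective. Hence the sum is a function of $\chi^{(s+1)}(u)$, and so is the output of $\psi$.

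\textbf{Conclusion across graphs.} The invariant is stated with the same $g_t$ applied to both graphs. This holds because $\HASH$ and the network weights are shared, so the same color yields the same embedding in $M_1$ and $M_2$. By \Cref{def:wleq}, $M_1\equiv_{\mathrm{1WL}}M_2$ gives equal color histograms at every round. A color-preserving bijection therefore exists at round $s(T)$, and it matches the output node multisets. This is exactly \Cref{def:bounded}.

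\textbf{Main obstacle.} The delicate step is the bookkeeping between the sequential bipartite half-updates and the simultaneous refinement in \eqref{eq:wl}.

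First, the hypothesis must be stated in terms of the \emph{finer} color $\chi^{(s)}$ for the appropriate $s$. Refinement only refines, so $\chi^{(s')}$ determines $\chi^{(s)}$ for $s'\ge s$. This lets us always pass to the latest round and absorb the off-by-one.

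Second, any normalization that is not per node must itself be a multiset function. Examples are degree normalization and batch or graph-level statistics. Degree is recoverable from $\chi^{(1)}$, and graph-level statistics are handled as in \Cref{lem:l1}.

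If the implementation uses a learned edge-weight embedding rather than raw $A_{ij}$, the argument is unchanged, since that embedding is a fixed function of $A_{ij}$.
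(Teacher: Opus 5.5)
Your argument is correct, and it takes a different route from the paper. The paper gives no proof of its own for this lemma: it identifies the branch as an MPNN and defers to \citet{xu2019powerful} and \citet{chen2023expressive}. You instead write out the inductive invariant $h^{(t)}(u)=g_t(\chi^{(s(t))}(u))$, with the same $g_t$ on both graphs, which is the mechanism behind those cited results.

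Proving it directly lets you check it against the paper's own refinement \eqref{eq:wl} and equivalence notion (\Cref{def:wleq}), and that buys something concrete. Neither citation matches the Gasse-style branch verbatim:
\citet{xu2019powerful} treat graphs without edge features and with simultaneous updates, and the MP-GNN class of \citet{chen2023expressive} aggregates messages of the form $A_{ij}f(h_j)$ rather than a general edge-conditioned $\phi(c_i,v_j,A_{ij})$. Your induction covers the sequential half-convolutions, charging one refinement round per half-step and using refinement monotonicity to absorb the offset. It also covers arbitrary edge-conditioned messages, because \eqref{eq:wl} hashes the full multiset of (color, weight) pairs. Your side remark that the branch ``matches the message-passing class of Chen et al.'' is therefore not quite accurate, but nothing in your proof relies on it.

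Your version also makes precise something \Cref{def:bounded} leaves vague. The bijection under which outputs agree must preserve colors at round $s(T)$, not merely the input features. For an MPNN, agreement under an arbitrary feature-preserving bijection is false, and the round-$s(T)$ form is what is needed when composing this lemma with the others.

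The paper's citation buys brevity and appeals to a standard result. Your route buys a proof that actually fits this architecture and the paper's definitions.
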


\begin{lemma}[Fusion and pooling preserve equivalence]
\label{lem:l4}
The per-node convex combination $\alpha\tilde v^{(g)}+(1-\alpha)\tilde v$ acts on aligned nodes and so preserves the multiset match, and mean, sum, and max pooling are
all order-independent functions of the final node multiset. Hence, two graphs whose node multisets agree receive equal pooled embeddings.
\end{lemma}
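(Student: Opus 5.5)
The plan is to prove \Cref{lem:l4} in two parts, matching its two claims. Throughout, I work with the color-preserving bijection $\pi$ supplied by 1-WL equivalence (\Cref{def:wleq}) and propagated through stages (i)--(iv) by \Cref{lem:l1,lem:l2,lem:l3}. Concretely, the inductive hypothesis I carry into this lemma is that there are bijections $\pi_C:C^{(1)}\to C^{(2)}$ and $\pi_V:V^{(1)}\to V^{(2)}$ such that for every variable node $j$ of $M_1$ both branch outputs agree at the image node: $\tilde v^{(g)}_j=\tilde v^{(g)}_{\pi_V(j)}$ and $\tilde v_j=\tilde v_{\pi_V(j)}$. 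The analogous statement holds on the constraint side. The important point is that the \emph{same} bijection works for both branches. This holds because each branch output at a node is a function of that node's stable 1-WL color alone, so any bijection preserving stable colors aligns both branches simultaneously.

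\textbf{Fusion.} The fuse is a fixed map $g(a,b)=\alpha a+(1-\alpha)b$ applied independently at each node, with the same $\alpha$ for every node. Then $g(\tilde v^{(g)}_j,\tilde v_j)=g(\tilde v^{(g)}_{\pi_V(j)},\tilde v_{\pi_V(j)})$, so the fused node features are matched by $\pi_V$. Nothing here uses convexity; any per-node function of aligned inputs would do.

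\textbf{Pooling.} Form the concatenated node set $\tilde C\oplus\tilde V$ and the bijection $\pi=\pi_C\sqcup\pi_V$. The multisets $\multiset{x_u}$ over $M_1$ and $\multiset{x_{\pi(u)}}$ over $M_2$ then coincide, so the two multisets of final node features are equal. I would argue that mean, sum, and max are each functions of the multiset alone, with max taken coordinatewise:
\begin{itemize}
\item sum is $\sum_u x_u=\sum_u x_{\pi(u)}$ by reindexing over the bijection $\pi$;
\item max is the coordinatewise maximum over the same set of values;
\item mean is the sum divided by the node count, and the counts agree because $\pi$ is a bijection.
\end{itemize}
Hence $z_1=z_2$.

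\textbf{Main obstacle.} The step needing care is the alignment claim in the inductive hypothesis, not the pooling itself. Matching the two branch outputs only as separate multisets would not suffice: fusion couples them per node, so I need joint alignment. I would secure it by proving that every stage output is a function of the stable color $\chi^{(\infty)}(u)$. For mean pooling I also need the node counts to agree. This follows because 1-WL equivalence forces identical histograms at iteration $0$, hence equal $|C|$ and $|V|$. For the node-type split I would note that the type is encoded in the initial color, since the feature dimensions $d_c\ne d_v$ differ, or it can be tracked explicitly.
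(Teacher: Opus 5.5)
Your proposal is correct and follows the paper's route. The paper's justification amounts to saying that the fuse is a per-node map acting on aligned nodes, and that mean, sum, and max pooling are order-independent functions of the node multiset. You make explicit what the paper leaves implicit: a \emph{single} bijection must align both branch outputs jointly, which you secure by treating every stage output as a function of the stable 1-WL color (with the graph-level attention statistics agreeing across the pair), and mean pooling needs equal node counts, which follow from agreement of the iteration-$0$ histograms.
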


\begin{lemma}[Virtual global nodes do not help]
\label{lem:l5}
Appending the two virtual global nodes preserves 1-WL equivalence. Each global node's feature is the mean of one side's features, an order-independent function of an already-matched multiset, and the edges it adds follow a fixed, uniform pattern that is identical on both graphs. The augmented graphs are therefore again 1-WL equivalent.
\end{lemma}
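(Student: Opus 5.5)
The plan is to run 1-WL refinement on the augmented graphs and check that the two virtual global nodes receive matching colors at every iteration. Let $M_1\equiv_{\mathrm{1WL}}M_2$. Write $M_r^+$ for $M_r$ augmented with a constraint-side global node $g_C^r$ and a variable-side global node $g_V^r$. Here $g_C^r$ carries the mean of the constraint features and is joined to every variable node; $g_V^r$ carries the mean of the variable features and is joined to every constraint node. I will take the added edges to have a fixed weight $w$, the same in both graphs, and the global nodes to carry a type tag so they cannot be confused with ordinary nodes. The goal is to show that the color histograms of $M_1^+$ and $M_2^+$ agree at every iteration $t$.

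\textbf{Step 1 (initial features).} Agreement of the $t=0$ histograms in \Cref{def:wleq} means that the multisets $\multiset{f_C(c)}$ and $\multiset{f_V(v)}$ agree between $M_1$ and $M_2$. The two sides are kept apart by the bipartite typing, which I will assume is encoded in the colors. Hence the side-wise means agree, so $\chi^{(0)}(g_C^1)=\chi^{(0)}(g_C^2)$ and $\chi^{(0)}(g_V^1)=\chi^{(0)}(g_V^2)$. The ordinary nodes keep their original features.

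\textbf{Step 2 (induction).} I will prove a statement stronger than histogram equality. For every $t$ there should be a map $\psi_t$ sending colors of $M_1^+$ to colors of $M_2^+$ such that $\psi_t$ carries the histogram of $M_1^+$ to that of $M_2^+$ and sends the color of $g_C^1$ to that of $g_C^2$, and likewise for $g_V$.

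For the step from $t$ to $t+1$, take an ordinary variable node $v$. Its refinement input is its original neighbourhood multiset together with the extra pair $(\chi^{(t)}(g_C),w)$. Since $v$ is adjacent to $g_C$ in both graphs, this extra pair is the same added ingredient for every variable node, and Step 1 together with the induction hypothesis make it correspond across the two graphs. The refined color of $v$ in $M^+$ is therefore determined by its refined color in the unaugmented run plus this common ingredient. So the ordinary-node histograms of $M_1^+$ and $M_2^+$ correspond exactly as the ordinary histograms of $M_1$ and $M_2$ do, and those agree by hypothesis.

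For the global node $g_C$, its neighbourhood multiset is the multiset of $(\chi^{(t)}(v),w)$ over all variable nodes $v$. This is exactly the variable-side histogram at step $t$, which corresponds across the two graphs by the induction hypothesis. Hence $g_C^1$ and $g_C^2$ receive corresponding colors, and the same argument handles $g_V$.

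\textbf{Step 3 (conclusion).} Step 2 shows that the augmented graphs are 1-WL equivalent. \Cref{thm:main}, applied to $M_1^+$ and $M_2^+$ rather than $M_1$ and $M_2$, then gives equal embeddings.

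The main obstacle is the "same refinement plus a common extra ingredient" claim in Step 2. Equal histograms of unrelated graphs do not supply a node-level bijection inside each color class, so I cannot simply transport the global node's neighbourhood node by node. To get around this, I will strengthen the induction hypothesis: the augmented coloring at step $t$ is a fixed injective function of the pair consisting of the original color and the global-node colors. Because the global-node colors correspond across the two graphs at every step, this function can be chosen consistently for both graphs. Phrasing it this way, instead of through a single bijection, is what I expect to make the argument go through.
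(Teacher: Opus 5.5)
Your proof is correct and follows the paper's reasoning: the side-wise means agree because they are functions of the matched, type-separated feature multisets, and the uniform global-node edge pattern adds the same ingredient to both graphs. Where the paper only asserts the conclusion inside the lemma statement, you carry out the induction over refinement rounds. The invariant in your last paragraph is the right one: ordinary nodes' augmented colors are a single common function $F_t$ of their original colors together with the shared global-node colors, and $F_t$ is well defined because the 1-WL hash is injective. With that invariant, the map $\psi_t$ in Step 2 should simply be the identity, since with a shared hash the histograms are literally equal, and injectivity of $F_t$ is never needed.
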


\section{Additional Robustness Results}
\label{app:robust}
This appendix collects the tables and figures that support the empirical claims of \Cref{sec:main,sec:pairs,sec:rwpe}. \Cref{tab:main} gives the headline result: all ten encoders of \Cref{sec:setup}, spanning the foundation-model family and four independent designs, map every one of the fifteen test pairs to a bit-identical embedding (mean cosine $1.000000$, exact-match rate $1.00$); on the hierarchical model, the diagnostic of \Cref{sec:discussion} accordingly returns a ``1-WL bounded'' verdict at EM $1.00$. \Cref{tab:natural} repeats the experiment with the synthetic topology dressed in the feature distributions of four standard MILP families, confirming that the representation equivalence reported in \Cref{sec:pairs} is not an artifact of the bare construction. \Cref{tab:enc} summarizes the input-encoding comparison underlying \Cref{sec:rwpe}: the baseline and LP-derived features leave the bound intact, whereas \rni{}, \rwpe{}, and \lappe{} separate the pairs. \Cref{fig:hierarchy} places every model on the Weisfeiler--Leman expressiveness ladder, placing the global-attention encoder at the same 1-WL level as a plain message-passing GNN, with the structural encodings strictly above it. \Cref{fig:layerwise} traces the embedding gap through the encoder's nine sub-layers, verifying that the representation equivalence of \Cref{thm:main} holds at every intermediate representation rather than only after pooling. \Cref{fig:identity} audits the numerical precision behind the ``bit-identical'' verdict of \Cref{sec:main}, and \Cref{fig:capacity} shows the equivalence is invariant as the model grows from $10{,}852$ to $2{,}508{,}292$ parameters. For reference, the full hierarchical model uses $\sim\!14.7\times$ the parameters and $\sim\!22\times$ the forward time of a plain GCN ($1.75$ vs.\ $0.08$\,ms) while producing the same embedding on the test pairs.

\begin{table}[t]
\centering
\caption{The bound persists when the pairs are dressed in the feature distributions of standard MILP families. Each family is instantiated at seven scales ($k=2,\dots,10$, giving $G_B$ up to ten disconnected components) and probed with all six foundation-family encoders ($7\times6=42$ cells). Every cell is bit-identical; the worst-case coordinate gap equals single-precision machine epsilon.}
\label{tab:natural}
\small
\begin{tabular}{@{}lccc@{}}
\toprule
MILP family & bit-identical & worst $\|\cdot\|_\infty$ & EM \\
\midrule
set cover & 42 / 42 & $6.0\times10^{-8}$ & 1.00 \\
combinatorial auctions & 42 / 42 & $6.0\times10^{-8}$ & 1.00 \\
bipartite matching & 42 / 42 & $6.0\times10^{-8}$ & 1.00 \\
multi-knapsack blocks & 42 / 42 & $6.0\times10^{-8}$ & 1.00 \\
\bottomrule
\end{tabular}
\end{table}

\begin{table}[t]
\centering
\caption{Input-encoding comparison. EM is the fraction of pairs left bit-identical; ``min cos'' is the smallest mean cosine over encoders (lower means a cleaner break).}
\label{tab:enc}
\small
\begin{tabular}{@{}lccl@{}}
\toprule
Encoding & EM & min cos & note \\
\midrule
baseline & 1.00 & 1.0000 & the 1-WL bound \\
\rni{} & 0.00 & 0.9950 & stochastic; no guarantee \\
\rwpe{} & 0.00 & 0.9968 & deterministic; $<$ 2-FWL \\
\lappe{} & 0.00 & 0.9992 & sign/basis ambiguity \\
LP-derived & 0.94 & 1.0000 & weak; LP optimum coincides \\
\bottomrule
\end{tabular}
\end{table}

\begin{figure}[t]
\centering
\includegraphics[width=0.7\columnwidth]{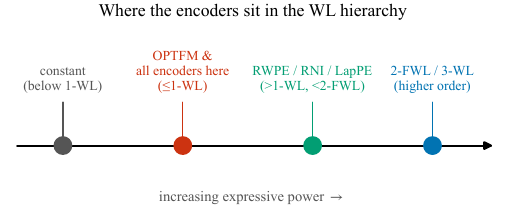}
\caption{Where the models in this paper sit on the expressiveness ladder. Every encoder we study, including the global-attention foundation model, lands at or below the 1-WL rung, the same ceiling as a plain message-passing GNN. The positional encodings we use as a remedy (RWPE, RNI, LapPE) sit strictly above 1-WL but below the second-order folklore test (2-FWL), which is why they break our specific pairs without being a universal fix.}
\label{fig:hierarchy}
\end{figure}

\begin{figure}[t]
\centering
\includegraphics[width=0.7\columnwidth]{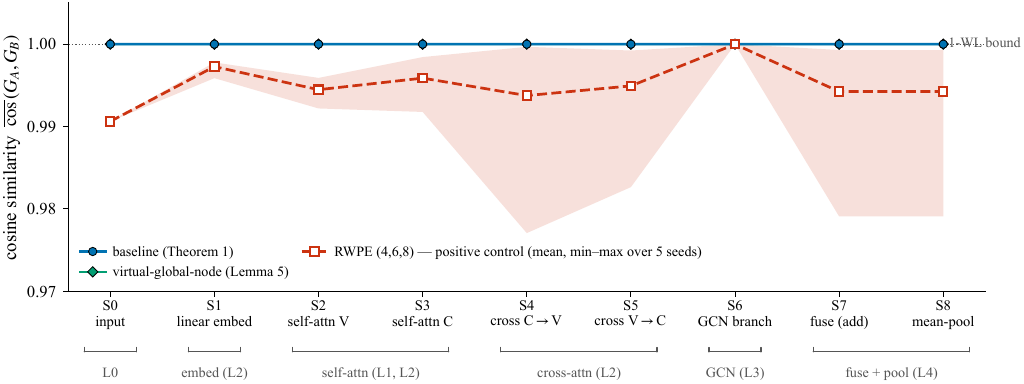}
\caption{Layer-by-layer view of the bound inside the hierarchical encoder. Moving left to right through the nine sub-layers (S0 input through S8 mean-pool, annotated with the lemma each one falls under), the baseline cosine between the two embeddings stays pinned at $1.0$ at every stage, and the virtual-global-node variant (\Cref{lem:l5}) coincides with it. The \rwpe{} control (mean and min--max band over five seeds) departs
from $1.0$ from the first layer onward. The bound is therefore not an artifact of the final pooling step; it holds at every intermediate representation, exactly as the compositional proof predicts.}
\label{fig:layerwise}
\end{figure}

\begin{figure}[t]
\centering
\includegraphics[width=0.7\columnwidth]{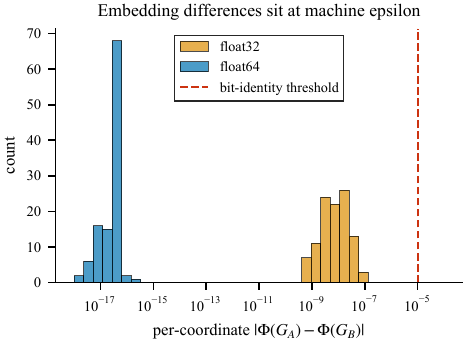}
\caption{Distribution of the per-coordinate embedding gap $|\Phi(M_A)-\Phi(M_B)|$ over all pairs and encoders. In double precision (float64) the gaps cluster near $10^{-17}$; in single precision (float32) near $10^{-8}$. Both populations sit far to the left of the bit-identity threshold ($10^{-5}$, dashed), so the reported $1.000000$ is exact equality realized in floating point, not an artifact of rounding or a lenient threshold.}
\label{fig:identity}
\end{figure}

\begin{figure}[t]
\centering
\includegraphics[width=0.7\columnwidth]{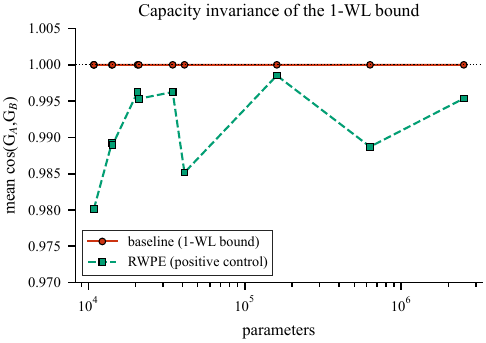}
\caption{Cosine similarity between the two embeddings of a pair as the hierarchical model grows from $10^4$ to $2.5{\times}10^6$ parameters. The baseline stays pinned at exactly $1.0$ at every capacity, so larger models do not approach the boundary; the \rwpe{}-augmented control stays strictly below $1.0$ throughout. Expressiveness here is set by the input encoding, not by model size.}
\label{fig:capacity}
\end{figure}

\section{Additional Empirical Figures}
\label{app:figs}
This appendix gathers the remaining figures behind the quantitative claims of \Cref{sec:main,sec:probes,sec:rwpe,sec:discussion}. \Cref{fig:main} shows the full encoder$\times$transform grid: at baseline and under the virtual-node and LP-derived transforms every cell is exactly $1.0$ (bit-identical), and only the structural input encodings drop the similarity, the visual counterpart of \Cref{tab:main}. \Cref{fig:probes} plots the probe battery of \Cref{sec:probes}, contrasting recoverable targets on random MILPs with their non-recoverability on the 1-WL-equivalent population. \Cref{fig:plateau} shows the joint-training experiment of \Cref{sec:rwpe}: the baseline loss is pinned at $\ln 2$ for the entire run while the \rwpe{}-augmented model converges toward zero, indicating that the equivalence is determined by the architecture rather than by optimization. \Cref{fig:footprint} summarizes the practical footprint discussed in \Cref{sec:discussion} in three panels: the escape size of each input encoding (\Cref{fig:encoding}), the downstream connectivity-prediction accuracy under the baseline encoding and under \rwpe{} (\Cref{fig:downstream}), and the prevalence of WL-twin nodes in random standard-family corpora (\Cref{fig:prevalence}). Whole-instance 1-WL collisions are rare (none among $7{,}140$ same-shape pairs per family), but WL-twin \emph{nodes} within an instance are common: $12.0\%$ of random set-cover and $6.3\%$ of combinatorial-auction instances contain them.

\begin{figure}[t]
\centering
\includegraphics[width=\columnwidth]{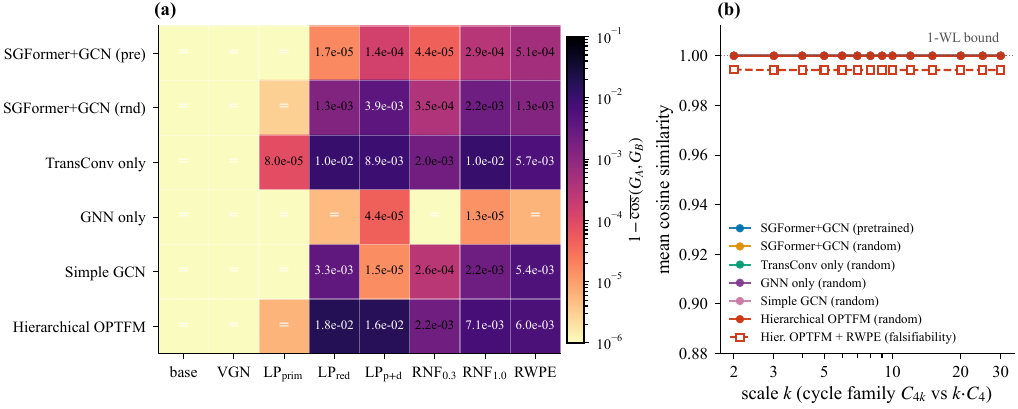}
\caption{Cosine similarity between the two embeddings of each pair, by encoder (rows) and input transform (columns). At baseline, and under the virtual-node and LP-derived transforms, every cell is exactly $1.0$: the two embeddings are bit-identical. Only changing the \emph{input encoding} to a structural one (\rwpe{}, \rni{}, \lappe{}) drops the similarity below $1.0$, consistent with \Cref{tab:main} and with the localization of expressiveness to the input encoding (\Cref{sec:rwpe}).}
\label{fig:main}
\end{figure}

\begin{figure}[t]
\centering
\includegraphics[width=\linewidth]{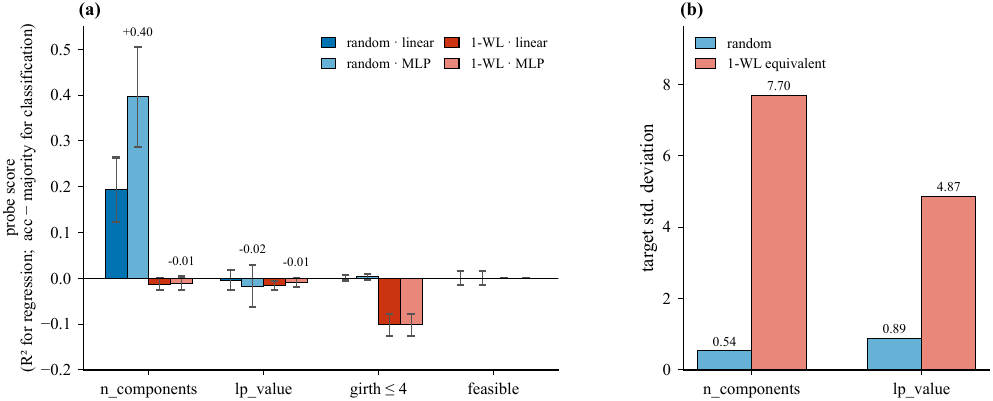}
\caption{Probe $R^2$ on random MILPs (where targets are recoverable) versus the 1-WL-equivalent population (where they are not recoverable), for structural targets not computed by 1-WL. The effect persists despite larger target variance on the 1-WL-equivalent side.}
\label{fig:probes}
\end{figure}

\begin{figure}[t]
\centering
\includegraphics[width=0.7\columnwidth]{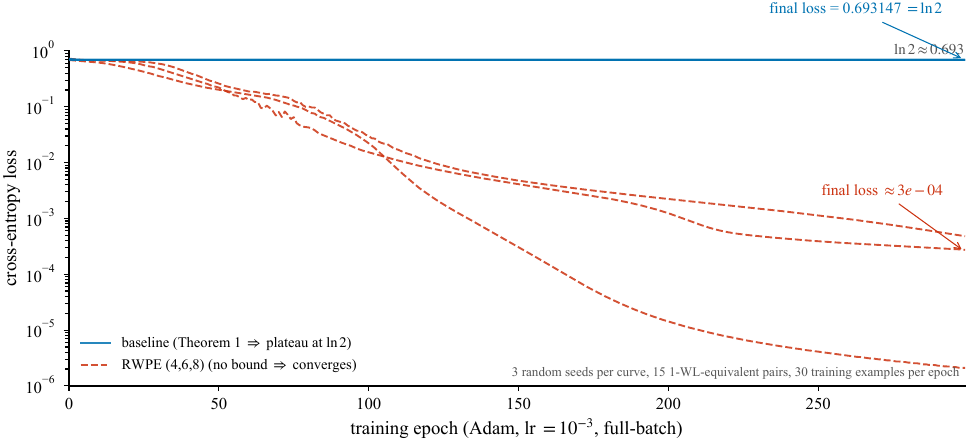}
\caption{Joint training to separate the pairs. The baseline loss is pinned at $\ln 2$ (chance) for the entire run, while the same model trained on \rwpe{}-augmented inputs converges toward zero. An architectural ceiling is invariant to training; a structural input encoding lifts it.}
\label{fig:plateau}
\end{figure}

\begin{figure}[t]
\centering
\begin{minipage}[b]{0.32\linewidth}
  \centering
  \includegraphics[width=\linewidth]{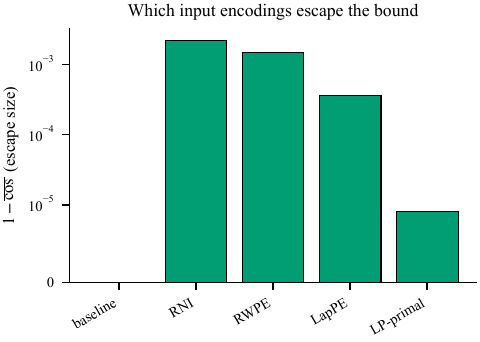}
  \subcaption{Escape size per input encoding ($1-\overline{\cos}$, log scale).}
  \label{fig:encoding}
\end{minipage}
\hfill
\begin{minipage}[b]{0.32\linewidth}
  \centering
  \includegraphics[width=\linewidth]{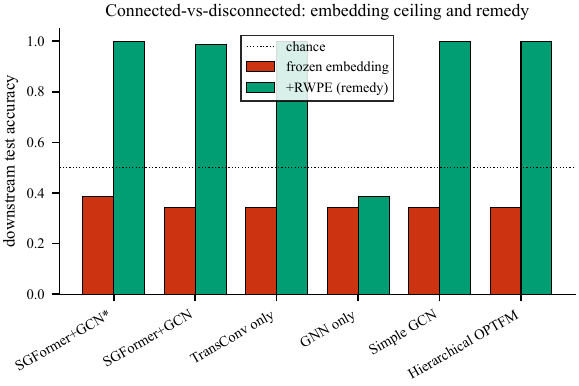}
  \subcaption{Connectivity-prediction accuracy: baseline vs.\ \rwpe{}.}
  \label{fig:downstream}
\end{minipage}
\hfill
\begin{minipage}[b]{0.32\linewidth}
  \centering
  \includegraphics[width=\linewidth]{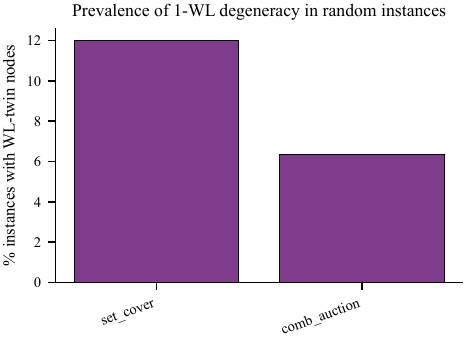}
  \subcaption{WL-twin nodes in random standard-family instances.}
  \label{fig:prevalence}
\end{minipage}
\caption{The bound's practical footprint, in three views.
\textbf{(a)}~(\subref{fig:encoding}) How far each input encoding pushes the two embeddings apart: the structural encodings (\rni{}, \rwpe{}, \lappe{}) separate the pairs by two to three orders of magnitude more than an LP-derived feature, which barely moves them because the LP optimum often coincides, motivating \rwpe{} as the primary remedy. \textbf{(b)}~(\subref{fig:downstream}) The downstream cost: an embedding-only classifier of a structural label is pinned near chance for every encoder at baseline, and an \rwpe{} input lifts most to near-perfect accuracy. \textbf{(c)}~(\subref{fig:prevalence}) The degeneracy is not only adversarial: $12\%$ of random set-cover and $6\%$ of combinatorial-auction instances already contain WL-twin nodes, so reduced resolution is a measurable property of ordinary corpora.}
\label{fig:footprint}
\end{figure}

\section{Full Probe Results}
\label{app:probes}
This appendix expands the probing analysis of \Cref{sec:probes}. \Cref{tab:probes} reports probe scores for all five graph-level targets on both populations, with each target's variance and the \rwpe{} capacity control. Non-recoverability on the 1-WL-equivalent population holds for every structural target and persists despite equal or larger target variance, while the capacity control recovers the signal in each case, indicating that the effect is a property of the embedding rather than of probe capacity.

\begin{table}[t]
\centering
\caption{Probe scores (MLP, 5 seeds): $R^2$ for regression targets, accuracy for classification. Non-recoverability on the 1-WL-equivalent population holds \emph{despite larger target variance}, and the capacity control (the same probe on \rwpe{}-augmented embeddings of the same instances) recovers the signal, indicating the effect is a property of the embedding rather than of probe capacity. Values are mean\,$\pm$\,std over five seeds.}
\label{tab:probes}
\small
\begin{tabular}{@{}lrrrr@{}}
\toprule
Target & var(rnd) & var(1wl) & rnd & 1wl \,/\, +\rwpe{} \\
\midrule
n\_components & 0.22 & 26.37 & $+0.23{\scriptstyle\pm.04}$ & $-0.01{\scriptstyle\pm.02}$ \,/\, $+0.51{\scriptstyle\pm.05}$ \\
num\_4cycles & 32.48 & 30.33 & $+0.58{\scriptstyle\pm.03}$ & $-0.01{\scriptstyle\pm.02}$ \,/\, $+0.58{\scriptstyle\pm.04}$ \\
spectral\_gap & 0.025 & 0.016 & $+0.50{\scriptstyle\pm.06}$ & $-0.00{\scriptstyle\pm.03}$ \,/\, $+0.20{\scriptstyle\pm.05}$ \\
lp\_value & 1.00 & 14.03 & $+0.00{\scriptstyle\pm.03}$ & $-0.02{\scriptstyle\pm.02}$ \,/\, $+0.13{\scriptstyle\pm.04}$ \\
girth$\le$4 (acc) & 0.02 & 0.25 & $0.98{\scriptstyle\pm.01}$ & $0.44{\scriptstyle\pm.05}$ \,/\, $1.00{\scriptstyle\pm.00}$ \\
\bottomrule
\end{tabular}
\vspace{2pt}
\end{table}

\section{Experimental Details}
\label{app:expdetails}
\paragraph{Encoders.}
Six foundation-family encoders: the pre-trained \sgformer{}+GCN checkpoint and its random-initialized twin; an attention-only ablation; a GCN-only ablation; a minimal GCN; and the full hierarchical multi-view model. Four independent encoders: a Graphormer-style model with degree and adjacency encodings~\citep{ying2021graphormer}; a GraphGPS-style hybrid~\citep{rampasek2022gps}; a Set-Transformer pooling model~\citep{lee2019set}; and the original Gasse bipartite GCN~\citep{gasse2019exact}. Because \Cref{thm:main} holds for all weights, random initialization suffices; where a pre-trained checkpoint exists, it is used and confirmed to behave identically.

\paragraph{Input transformations and metrics.}
Each encoder is evaluated under the unmodified baseline; the virtual-global-node augmentation; random node initialization (\rni{})~\citep{abboud2020rni}; deterministic LP-derived features; random-walk positional encodings (\rwpe{})~\citep{dwivedi2022rwpe}; and Laplacian positional encodings (\lappe{})~\citep{dwivedi2021lap}. For each (encoder, transform, pair), we report mean cosine similarity, the worst-case coordinate gap $\|\Phi(M_A)-\Phi(M_B)\|_\infty$, and the exact-match rate (EM). Randomized settings use five seeds with bootstrap $95\%$ confidence intervals. All experiments run on a laptop CPU in minutes; no GPU is required.

\paragraph{LP-derived features.}
The LP-derived transform solves the continuous relaxation of the MILP once ($\min\{c^\top x: Ax\circ b,\ l\le x\le u\}$, dropping the integrality constraints) and appends per-node features read off from the optimal primal--dual solution: for each variable node, its primal LP value $x^\star_j$, reduced cost, and a tight-bound indicator; for each constraint node, its dual value $y^\star_i$ and slack. These are the same solver-derived quantities used in learning-to-branch pipelines~\citep{gasse2019exact,prouvost2020ecole}. Because our constructed pairs are built from regular or vertex-transitive topologies with symmetric objectives, the LP relaxation attains the same optimal value with a symmetric optimal solution on $G_A$ and $G_B$, so the LP-derived features are themselves 1-WL-invariant on these pairs and leave the bound essentially intact ($\text{EM}=0.94$; \Cref{tab:enc}). LP-derived features can separate pairs whose optima break the symmetry, which is why they are reported as a weak, distribution-dependent signal rather than a general remedy.

\section{Extended Discussion of Scope and Validity}
\label{app:threats}\label{sec:threats}
\paragraph{Constructed pairs versus natural corpora.}
The exact 1-WL pairs are constructed, and exact collisions between independently sampled instances are rare; the paper does not claim that natural corpora are full of exact 1-WL twins. The claims are narrower and threefold: the bound is exact and architecture-universal, WL-twin \emph{nodes} are common in random instances, and the bound binds wherever regular or symmetric substructure occurs. A full sweep of a benchmark library with solver-derived hardness labels is a natural direction for future work; it requires a solver in the loop but does not affect the theorem.

\paragraph{Scope of the proof.}
\Cref{thm:main} covers encoders built from per-node maps and symmetric aggregation; its lemmas formalize each component of the hierarchical encoder as a symmetric multiset function and compose them. An implementation that introduces an order-dependent operation outside this class, for instance, hard top-$k$ attention, a sequence-sensitive readout, or an ordering induced by node identifiers, falls outside the theorem and would require a separate analysis. Such operations are absent from the foundation models studied here and from the standard graph transformers in \Cref{sec:related}, all of which aggregate symmetrically. The empirical study: ten encoders, multiple MILP families, capacities from $10^4$ to $2.5\times10^6$ parameters, and both single and double precision, confirms the bound on every configuration tested, including ones not literally enumerated by the lemmas.

\paragraph{Isolation of the architectural bound.}
A faithful Graphormer also uses shortest-path encodings that exceed 1-WL. These are deliberately separated from the attention mechanism and treated, like \rwpe{}, as input-level signals, so that \Cref{thm:main} isolates the contribution of the \emph{architecture} alone. This separation is what makes the localization in \Cref{sec:rwpe} precise: the expressiveness gain attributed to such transformers enters through their input features, not their attention. The study concerns expressiveness and embedding readout; evaluating an end-to-end solver trained on these embeddings is a complementary question that the theorem does not bear on.

\paragraph{Pre-trained checkpoint.}
The shipped checkpoint is for the simpler \sgformer{}+GCN baseline, not the full hierarchical model. Since the bound holds for all weights, random initialization suffices for the latter, and the pre-trained model behaves identically where it applies.

%% ================================================================
%%  12. CONCLUSION
%% ================================================================

\end{document}